\newcommand{\myparskip}{3pt}
\title{Non-convex online learning via algorithmic equivalence}
\date{}
\newcommand{\diag}{\mbox{diag}}
\newcommand{\A}{\mathcal{A}}
\newcommand{\K}{\ensuremath{\mathcal K}}
\def\regret{\mbox{{Regret}}}
\newcommand{\ignore}[1]{}
\newcommand{\ug}[1]{\noindent{\textcolor{cyan}{\{{\bf UG:} \em #1\}}}}
\theoremstyle{plain}
\newtheorem{theorem}{Theorem}
\newtheorem{lemma}[theorem]{Lemma}
\newtheorem{assumption}{Assumption}
\newtheorem*{theorem*}{Theorem}
\newtheorem*{lemma*}{Lemma}
\newtheorem*{corollary*}{Corollary}
\newtheorem*{proposition*}{Proposition}
\newtheorem*{claim*}{Claim}
\newtheorem*{fact*}{Fact}
\newtheorem*{observation*}{Observation}
\newtheorem*{assumption*}{Assumption}
\theoremstyle{definition}
\newtheorem{remark}[theorem]{Remark}
\newtheorem*{definition*}{Definition}
\newtheorem*{remark*}{Remark}
\newtheorem*{example*}{Example}
 \theoremstyle{plain}
\newtheorem*{theoremaux}{\theoremauxref}
\gdef\theoremauxref{1}
\DeclareMathAlphabet{\mathbfsf}{\encodingdefault}{\sfdefault}{bx}{n}
\DeclareMathOperator*{\argmin}{arg\,min}
\newcommand{\reals}{\mathbb{R}}
\newcommand{\eps}{\varepsilon}
\newcommand{\eqdef}{:=}
\let\oldtfrac\tfrac
\renewcommand{\tfrac}[2]{\smash{\oldtfrac{#1}{#2}}}
\let\nablaold\nabla
\renewcommand{\nabla}{\nablaold\mkern-2.5mu}
\author{
  \and  Udaya Ghai\thanks{Google AI Princeton} \thanks{Princeton University}\\
  \texttt{ughai@princeton.edu}\\
  \and  Zhou Lu\footnotemark[1] \footnotemark[2] \\
  \texttt{zhoul@princeton.edu}\\
  \and  Elad Hazan\footnotemark[1] \footnotemark[2]\\
  \texttt{ehazan@princeton.edu}}
\begin{document}
\maketitle

\begin{abstract}

We study an algorithmic equivalence technique between non-convex gradient descent and convex mirror descent. We start by looking at a harder problem of regret minimization in online non-convex optimization. We show that under certain geometric and smoothness conditions, online gradient descent applied to  non-convex  functions is an approximation of online mirror descent applied to convex functions under reparameterization. In continuous time, the gradient flow with this reparameterization was shown to be \emph{exactly} equivalent to continuous-time mirror descent by Amid and Warmuth \cite{amid2020reparameterizing}, but theory for the analogous discrete time algorithms is left as an open problem. We prove an $O(T^{\frac{2}{3}})$ regret bound for non-convex online gradient descent in this setting, answering this open problem. Our analysis is based on a new and simple algorithmic equivalence method. 
\end{abstract}



\section{Introduction}
Gradient descent is probably the simplest and most popular algorithm in convex optimization, with numerous and  extensive studies on its convergence properties.  For non-convex objectives, it is known that gradient descent does not necessarily converge to the global optimum, which is in general NP hard. Thus recent  research focuses on alternate objectives, such as finding first-order stationary points efficiently, or the importance of higher order stationary points, e.g. \cite{jin2017escape, lee2016gradient,agarwal2017finding}. 

However, the study of global convergence of gradient descent for non-convex objectives is increasingly important due to the fact that in practice, gradient descent and its variants can achieve zero error on a highly non-convex loss function of a deep neural network.

To explain the success of modern deep learning it is thus important to understand why gradient descent can converge to a global minimum for some highly non-convex functions. Several important research directions have stemmed from this motivation, including the study of optimization for deep linear networks  \cite{ kawaguchi2016deep,laurent2018deep,arora2018optimization}, non-convex matrix factorization \cite{arora2019implicit}, provable convergence for neural networks in the linearization regime of the neural tangent kernel \cite{soltanolkotabi2018theoretical,du2018gradient_deep,du2018gradient,allenzhu2019convergence,zou2018stochastic,jacot2018neural,bai2019beyond}, and more.

A recent promising approach, proposed in  \cite{amid2020reparameterizing, amid2020winnowing}, is to consider reparameterizing mirror descent as gradient descent. In particular, \cite{amid2020reparameterizing} shows that a continuous-time online mirror descent with a convex loss can be written in an equivalent form of a continuous-time online gradient descent whose loss isn't necessarily convex after reparameterization, but has significant structure. \cite{amid2020winnowing} further explores this idea, showing that after reparameterization, online gradient descent has the same worst-case regret bound as the exponentiated gradient algorithm. Both \cite{amid2020reparameterizing} and \cite{amid2020winnowing} provide a new prospect on understanding the convergence of non-convex gradient descent. More recently, \cite{li2022implicit} provides a more general result, tightly characterizing when gradient flow can be written as mirror flow by introducing the notion of a commuting parameterization.

\subsection{The Amid-Warmuth question}\label{sec:amid-warmuth}

The works of \cite{amid2020reparameterizing, amid2020winnowing} are limited in two respects: either they are restricted to the continuous-time gradient-flow setting, which is impractical, or they address specific algorithms (Winnow, Hedge, EG for linear regression) with a relative entropy divergence. Amid and Warmuth pose the question of extending the reparameterization idea to the general online convex optimization as well as the conditions where this can be obtained. Solving the question from \cite{amid2020reparameterizing}  would give an answer to realistic scenarios and algorithms, with precise regret and running time bounds, for certain non-convex optimization problems that can be solved to global optimality using gradient descent. 

\subsection{Statement of results}

Our study starts from the more general problem of regret minimization in online convex optimization. Specifically, we study the reparameterization of online mirror descent (OMD) as online gradient descent (OGD) \cite{OGD} for general online convex optimization. We show that under certain geometric and smoothness conditions, a non-convex reparameterized OGD algorithm closely match the original convex OMD algorithm. Quantitative bounds on this matching allow us to  prove that the OGD algorithm after reparameterization achieves an $O(T^{\frac{2}{3}})$ regret bound. This bound is discrete-time and fairly general, applying broadly to smooth, bounded mirror descent regularizers rather than anything more specific. We also provide sufficient conditions for non-convex losses such that OGD implicitly has an equivalent mirror descent regularizer and hence will have $O(T^{\frac{2}{3}})$ regret. 

Our analysis relies on a simple and new algorithmic equivalence technique, which may be of independent interest. The key step is to show that the outputs of OMD and reparameterized OGD are very close to each other when initialized from the same point. Considering the OGD update as a perturbed version of OMD along with the fact that OMD can tolerate bounded (adversarial) perturbation per trial  allows us to prove an $O(T^{\frac{2}{3}})$ regret bound for the OGD update.

This answers the open question raised in \cite{amid2020reparameterizing}, generalizing from continuous-time gradient flow and specialized algorithms, to general online convex optimization. The regularization term of MD is also generalized from relative entropy to any strongly convex function. Further, for discrete-time reparameterization, our results extend  to arbitrary Lipschitz convex loss functions, as opposed to custom analysis for specific settings of expert prediction, linear prediction, and linear regression \cite{amid2020winnowing}. 


\subsection{Paper structure}
Section~\ref{sec:prelim} introduces the setting and sketches out background work in the continuous setting.  Section~\ref{sec:result} provides our main result, which is subsequently proven in Section~\ref{sec:analysis}.  Section~\ref{sec:reverse} provides analysis in the opposite direction, showing the existence of an equivalent mirror descent regularizer for some non-convex gradient descent problems. Some analysis is deferred to the Appendix.

\subsection{Related works}
Gradient descent is the simplest and one of the most popular algorithms in convex optimization. Extensive studies have been conducted to show the convergence of GD to the global minimizer in both the stochastic optimization setting \cite{nemirovski2009robust}, and the online convex optimization setting \cite{zinkevich2003online}. For the non-convex setting, however, finding the global minimum is NP-hard and most work focuses on the convergence to first-order stationary points instead. The well-known descent lemma guarantees the convergence of non-convex GD when the objective is smooth, and dropping smoothness is possible if other conditions are introduced \cite{bauschke2017descent}. Many works consider avoiding saddle points; \cite{lee2016gradient} shows that GD converges to local minimizers a.s. if all saddle points are strict while \cite{jin2017escape} proposes perturbed gradient descent to escape saddle points. It's also shown that even in the online setting, chasing first-order stationary points is possible \cite{hazan2017efficient}.

Mirror descent, first introduced by \cite{nemirovskij1983problem}, generalizes the gradient descent method in the sense that it can adapt to the `geometry' of the optimization problem \cite{beck2003mirror}. Its analogue in the online setting, online mirror descent also achieves tight regret bounds \cite{srebro2011universality} and without projection is shown to be equivalent to the classical regularized follow-the-leader (RFTL) method with constant step-size. Though it's natural to think of mirror descent as changing the geometry of the optimization objective in gradient descent \cite{gunasekar2021mirrorless}, the idea to explain non-convex GD by a corresponding convex MD is explored only recently.

The most relevant works to ours are \cite{amid2020reparameterizing} and \cite{amid2020winnowing}, which prove the equivalence between continuous time OMD and OGD after reparameterization, and special discrete-time algorithms when the regularization is the relative entropy, respectively. Our results extend \cite{amid2020reparameterizing} and \cite{amid2020winnowing} to general discrete-time OCO with general regularization.

\section{Preliminaries}\label{sec:prelim}
\subsection{Notation}
For a function $f: \reals^d \rightarrow \reals^d$, we use the notation $J_f(x): \reals^d \rightarrow \reals^d$ to denote the Jacobian of $f$ at $x$.  We use the notaion $\odot$ to represents an element-wise product of vectors. Given a strictly convex function $R:\reals^d \rightarrow \reals^d$, we denote the Bregman divergence as $$D_R(x\| y) \eqdef R(x) - R(y) - \nabla R(y)^{\top} (x-y)~.$$  For a convex set $\K$ and strictly convex regularizer $R$, we use $\Pi^R_{\K}(x) \eqdef \argmin_{y \in \K} D_R(y\| x)$ to denote Bregman projection, with  shorthand $\Pi_{\K} \eqdef \Pi^{\|\cdot \|^2}_{\K}$ for Euclidean projection.  Given a positive-definite matrix $M \in \reals^{d \times d}$, we define the norm $\|x\|_M \eqdef \sqrt{x^{\top}Mx}$.  We use the notation, $B_p \eqdef \{x \in \reals^d: \|x\|_p \le 1\}$ for an $\ell_p$ ball and  $B^{+}_p$ to denote the intersection of the $\ell_p$ ball and the positive orthant.

\subsection{Online convex optimization}
We consider the online convex optimization (OCO) problem. At each round $t$ the player $\A$ chooses $x_t\in \K$ where $\K\subset \reals^d$ is some convex domain, then the adversary reveals loss function $f_t(x)$ and player suffers loss $f_t(x_t)$. The goal is to minimize regret:
$$
\regret(\A)=\sum_{t=1}^T f_t(x_t)-\min_{x\in \K} \sum_{t=1}^Tf_t(x)
$$
The player is allowed to get access to the (sub-)gradient $\nabla_x f_t(x_t)$ as well.

\subsection{Reparameterizing continuous-time mirror flow as gradient flow} 
In the continuous-time setting, we have exact conditions from \cite{amid2020reparameterizing} where the trajectory of mirror-flow \emph{exactly} coincides with that of gradient descent in a reparameterized space. Concretely, mirror flow on $f:\reals^d \rightarrow \reals$ is defined by the following ordinary differential equation (ODE):
\begin{align}\label{eq:mirror_flow}
    \frac{\partial}{\partial t} \nabla R(x(t)) = -\eta \nabla f(x(t))~. 
\end{align}

In Theorem~2 of \cite{amid2020reparameterizing}, this update is shown to be equivalent to the following gradient flow ODE with $x(t) = q(u(t))$:
\begin{align}\label{eq:reparam_flow}
    \frac{\partial u}{\partial t} = -\eta \nabla f(q(u(t)))~,
\end{align}
if the mirror descent regularizer $R$ and reparameterization function $q$ satisfy $[\nabla^{2} R(q(u))]^{-1}=J_q(u) J_q(u)^{\top}$. This relationship between the Hessian of the OMD regularizer and the reparameterization function assures that, up to second order factors in $\|u-v\|_2$
\begin{align} \label{eq:approx_div}
    D_R(q(u)||q(v)) \approx \frac{1}{2}\|u-v\|^2_2 ~,
\end{align}
so the geometry induced by $R$ is approximately transformed into a Euclidean geometry by $q^{-1}$.  Because higher order factors vanish in continuous time, this assures that mirror flow and this reparameterized gradient flow coincide as desired.

\subsection{Discretizing the updates}
Unfortunately, continuous-time updates cannot be implemented in practice so we must rely on discretization.  Using the forward Euler scheme for discretization of \eqref{eq:mirror_flow}, yields the mirror descent algorithm.  After discretization, the exact equivalence of mirror descent and reparameterized gradient descent no longer holds as higher order factors in \eqref{eq:approx_div} are relevant. This motivates the open problem of \cite{amid2020reparameterizing} of finding general conditions under which the discretizations of these continuous time trajectories closely track each other. We tackle extending these results through the online setting described above.

The Online Mirror Descent (OMD) algorithm has the following update on $x_t$:
\begin{align*}
    &\nabla R (y_{t+1})=\nabla R(x_t)-\eta \nabla f_t(x_t)\\
     &x_{t+1}=\argmin_{x\in \K} D_R(x\| y_{t+1})
\end{align*}
where $R: \K \rightarrow \reals$ is a 1-strongly convex regularization over the domain\footnote{Usually, the strong convexity of a mirror descent regularizer is with respect to some norm that is not necessarily the $\ell_2$ norm. In this work, the focus is on the dependence on the regret on $T$, so for simplicity we consider just the $\ell_2$ strong convexity, which will worsen constants, but does not affect the dependence on $T$.}, $D_R$ is the Bregman divergence, and $y_1$ is initialized to satisfy $\nabla R (y_1)=0$. 

Another equivalent interpretation of OMD is
$$
x_{t+1}=\argmin_{x\in \K} \nabla f_t(x_t)^{\top}(x-x_t) +\frac{1}{\eta} D_R(x||x_t)~.
$$
The most important special case is  online gradient descent (OGD):
$
x_{t+1}=\Pi_{\K} (x_t-\eta \nabla f_t(x_t))
$
, which is OMD with $D_R(x||x_t)=\frac{1}{2}\|x-x_t\|_2^2$. In this paper we consider reparameterizing general OMD update as a simple OGD update. We introduce assumptions of the regularizer, reparameterization, and losses in the sequel.

\subsection{Assumptions}
\begin{assumption}\label{assumption:reparam_jacobian}
There exists a convex domain $\K'\subset \reals^{d}$ and a bijective reparameterization function $q: \K'\to \K$ satisfying 
$
[\nabla^{2} R(x)]^{-1}=J_q(u) J_q(u)^{\top}$, with $x=q(u)$.

\end{assumption}

We also make the following smoothness/Lipschitz assumptions:
\begin{assumption} \label{assumption:lipschitz}
Let $G > 1$ be a constant. We assume $q$ is $G$-Lipschitz, and the $1$- strongly convex regularization $R$ is smooth with its first and third derivatives upper bounded by some constant $G$. The first and second derivatives of $q^{-1}$ are bounded by $G$.  Furthermore, assume that for all $x \in \K$ $D_R(x \| \cdot)$ is $G$-Lipschitz over $\K$.
\end{assumption}

Finally, we assume loss functions have bounded gradients and the convex domain has bounded diameter with respect to the Bregman divergence. 

\begin{assumption}\label{assumption:bounds}
The loss functions $f_t$ have gradients bounded by $\|\nabla f_t(x)\|_2 \le G_F$ for all $x\in \K$. The diameter of $\K$, $ \sup_{x,y \in \K} D_R(x\| y) \le D$~.
\end{assumption}

\section{Algorithm}\label{sec:result}
We now provide our main result, a regret bound for Algorithm~\ref{alg1}. We note, Algorithm~\ref{alg2} and Algorithm~\ref{alg1}  are the (projected) forward euler discretizations of \eqref{eq:mirror_flow} and \eqref{eq:reparam_flow}. It's tempting to directly analyze the regret of OGD with losses $\tilde{f}_t(u)=f_t(q(u))$. The main barrier in doing so is that $\tilde{f}$ isn't necessarily convex. However, we show that the OMD and OGD updates are in fact $O(\eta^{\frac{3}{2}})$-close to each other, therefore the OGD update still suffers only sublinear regret.

\begin{minipage}{0.49\linewidth}
\begin{algorithm}[H]
\caption{Online Mirror Descent} \label{alg2}
\begin{algorithmic}[1]
\STATE Input: Initialization $x_1 \in \K$, regularizer $R$.\\
\FOR{$t = 1, \ldots, T$}
\STATE Predict $x_t$
\STATE Observe $\nabla f_t (x_t)$ 
\STATE Update 
\begin{align*}
    y_{t+1} &=  (\nabla R)^{-1}(\nabla R(x_t) - \eta \nabla f_t(x_t))\\
    x_{t+1} &= \Pi^{R}_{\K} (y_{t+1})
\end{align*}
\ENDFOR
\end{algorithmic}
\end{algorithm}
\end{minipage}
\hfill
\begin{minipage}{0.49\linewidth}
\begin{algorithm}[H]
\caption{Online Gradient Descent} \label{alg1}
\begin{algorithmic}[1]
\STATE Input: Initialization $u_1 \in \K' = q^{-1}(\K)$.\\
\FOR{$t = 1, \ldots, T$}
\STATE Predict $u_t$
\STATE Observe $\nabla \tilde{f}_t(u_t) = \nabla f_t(q(u_t)))$
\STATE 
Update 
\begin{align*}
    v_{t+1} &= u_t-\eta \nabla \tilde{f}_t(u_t)\\
    u_{t+1} &= \Pi_{\K'} (v_{t+1})
\end{align*}

\ENDFOR
\end{algorithmic}
\end{algorithm}
\end{minipage}

\begin{remark}
The gradient oracle $\nabla \tilde{f}_t$ can also be calculated from an oracle of $\nabla f_t$ if we know the reparameterization function $q$.
\end{remark}

\begin{remark}
The reparameterized gradient descent uses Euclidean projection rather than Bregman projection.  This can potentially be an advantage if Euclidean projection can be computed efficiently on $q^{-1}(\K)$ (eg. via method like \cite{usmanova2021fast}), while Bregman projection is less efficient on $\K$.
\end{remark}

\begin{theorem}\label{thm:main}
Under Assumptions~\ref{assumption:reparam_jacobian}-\ref{assumption:bounds}, by setting $\eta=T^{-2/3} D^{2/3} G^{-10/3} G^{-1}_F$ the regret of Algorithm \ref{alg1} is upper bounded by $O(T^{2/3} D^{1/3} G^{10/3}G_F)$.
\end{theorem}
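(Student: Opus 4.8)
The plan is to recognize Algorithm~\ref{alg1}, pushed forward to the original coordinates by $q$, as Online Mirror Descent (Algorithm~\ref{alg2}) corrupted by a small adversarial perturbation each round, and then to exploit the robustness of OMD's regret to such perturbations. Write $x_t' \defeq q(u_t) \in \K$ for the OGD iterates mapped into $\K$ (so that, since $q$ is a bijection, the regret of $\{u_t\}$ against $\{\tilde f_t\}$ over $\K'$ equals the regret of $\{x_t'\}$ against $\{f_t\}$ over $\K$), and let $\hat x_{t+1} \defeq \Pi^R_\K\big((\nabla R)^{-1}(\nabla R(x_t') - \eta\nabla f_t(x_t'))\big)$ be the point that one OMD step from $x_t'$ on loss $f_t$ would produce. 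The heart of the argument is a one-step closeness lemma: $\|\hat x_{t+1} - x_{t+1}'\|_2 = O\big(\eta^{3/2}\,\mathrm{poly}(G,G_F)\big)$.

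To prove the one-step lemma I would first compare the \emph{unprojected} updates. By the chain rule $\nabla\tilde f_t(u_t) = J_q(u_t)^\top\nabla f_t(x_t')$, so Taylor-expanding $q$ at $u_t$ gives $q(v_{t+1}) = x_t' - \eta J_q(u_t)J_q(u_t)^\top\nabla f_t(x_t') + O(\eta^2)$, while Taylor-expanding $(\nabla R)^{-1}$ at $\nabla R(x_t')$ gives $(\nabla R)^{-1}(\nabla R(x_t') - \eta\nabla f_t(x_t')) = x_t' - \eta[\nabla^2 R(x_t')]^{-1}\nabla f_t(x_t') + O(\eta^2)$; by Assumption~\ref{assumption:reparam_jacobian} the two first-order terms coincide, so the unprojected iterates are $O(\eta^2)$-close, with constants controlled by the second derivatives of $q$, the third derivative of $R$, and $\|\nabla f_t\|_2\le G_F$ (Assumptions~\ref{assumption:lipschitz}--\ref{assumption:bounds}). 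The remaining discrepancy comes from the two different projections. Using the second-order relation $D_R(q(u)\|q(v)) = \tfrac12\|u-v\|_2^2 + O(\|u-v\|_2^3)$ that follows from Assumption~\ref{assumption:reparam_jacobian} (cf.~\eqref{eq:approx_div}), the Bregman-projection objective $u\mapsto D_R(q(u)\|q(v_{t+1}))$ and the Euclidean objective $u\mapsto\tfrac12\|u-v_{t+1}\|_2^2$ differ by only $O(\eta^3)$ on the $O(\eta)$-sized neighborhood of $\K'$ that contains both their constrained minimizers (because $v_{t+1}$ lies within $\eta\|\nabla\tilde f_t(u_t)\|_2 = O(\eta G G_F)$ of $\K'$). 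Both objectives are $1$-strongly convex, so their constrained minimizers are $O(\sqrt{\eta^3}) = O(\eta^{3/2})$ apart; applying the $G$-Lipschitz map $q$ and adding the earlier $O(\eta^2)$ term gives the lemma.

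With the lemma in hand I would run the standard OMD regret proof, absorbing the perturbation into the telescoping. For any $x^*\in\K$, convexity and the usual mirror-descent inequality (with the generalized Pythagorean theorem for Bregman projections) give $f_t(x_t') - f_t(x^*) \le \langle\nabla f_t(x_t'), x_t' - x^*\rangle \le \tfrac1\eta\big(D_R(x^*\|x_t') - D_R(x^*\|\hat x_{t+1})\big) + \tfrac\eta2\|\nabla f_t(x_t')\|_2^2$. Swapping $\hat x_{t+1}$ for the true iterate $x_{t+1}'$ costs at most $\tfrac1\eta|D_R(x^*\|\hat x_{t+1}) - D_R(x^*\|x_{t+1}')| \le \tfrac{G}{\eta}\cdot O\big(\eta^{3/2}\mathrm{poly}(G,G_F)\big)$, using that $D_R(x^*\|\cdot)$ is $G$-Lipschitz on $\K$ (Assumption~\ref{assumption:lipschitz}) and $\hat x_{t+1}, x_{t+1}'\in\K$. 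Summing over $t$ and using $D_R(x^*\|x_1')\le D$ and $\|\nabla f_t\|_2\le G_F$,
\[
\regret \leq \frac{D}{\eta} + \frac{\eta}{2}G_F^2 T + O\big(\mathrm{poly}(G,G_F)\,\eta^{1/2}T\big).
\]
Balancing the two dominant terms $D/\eta$ and $\eta^{1/2}T$ forces $\eta = \Theta\big(T^{-2/3}D^{2/3}G^{-10/3}G_F^{-1}\big)$ (for which the $\eta G_F^2 T$ term is lower order), and substituting yields the $O\big(T^{2/3}D^{1/3}G^{10/3}G_F\big)$ regret.

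I expect the main obstacle to be the projection-reconciliation step, i.e. controlling $\Pi^R_\K\circ q$ against $q\circ\Pi_{\K'}$: one must check that the pre-projection points $v_{t+1}$ and $(\nabla R)^{-1}(\cdots)$ remain in an $O(\eta)$-neighborhood where the regularity of $q$, $q^{-1}$ and $\nabla R$ from Assumption~\ref{assumption:lipschitz} is still usable, carry out the strong-convexity argument that upgrades an $O(\eta^3)$ objective gap to an $O(\eta^{3/2})$ minimizer gap, and track the exact powers of $G$ and $G_F$ so they match the theorem's constants; the rest is a routine adaptation of the OMD analysis.
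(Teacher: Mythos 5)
Your proposal is correct and follows essentially the same route as the paper: a one-step closeness lemma showing the reparameterized OGD iterate and the OMD iterate are $O(\eta^{3/2}\,\mathrm{poly}(G,G_F))$ apart, combined with a perturbation-robust OMD regret analysis that absorbs the per-round drift through the $G$-Lipschitzness of $D_R(x^{*}\|\cdot)$ in the telescoping sum, and a final balancing of $\eta$. The only internal difference is cosmetic: you prove the closeness lemma by comparing the unprojected steps via Taylor expansion and then reconciling the two projections, whereas the paper compares the proximal (argmin) formulations of the two full updates directly; both rest on Assumption~\ref{assumption:reparam_jacobian} and the same strong-convexity argument that converts a small objective gap into an $O(\eta^{3/2})$ gap between constrained minimizers.
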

Proof of the theorem is in Section~\ref{sec:analysis}.

\subsection{Examples}
We now provide a few example settings. Note that in all three settings, $q$ are such that $\tilde{f}(u) = f(q(u))$ can be nonconvex for a convex $f$.
\paragraph{Exponentiated gradient using quadratic reparameterization} 
One of the most popular algorithms for online learning over the filled-in simplex\footnote{One drawback of our approach is that the current analysis does not work for the true simplex like \cite{amid2020winnowing} because $q^{-1}(\K)$ is the positive part of the unit sphere, which is not a convex set.} $\K = B^{+}_1$ is the Exponentiated Gradient (EG) \cite{arora2012multiplicative}. In this case, we show that OGD with quadratic reparameterization $q(u)=\frac{1}{4} u \odot u$  has vanishing average regret. We empirically verify that reparameterized GD iterates and EG iterates stay close on a toy problem in Figure~\ref{fig:eg_gd}.  In \cite{amid2020winnowing} custom analysis is provided for linear losses that achieves optimal regret, but our approach has a simpler and more general analysis.
If $R(x) = \sum_{i=1}^d x_i \log(x_i)$ then OMD is EG.  In this case, if $x = q(u)$, then
\begin{align*}
    J_q(u)J_q(u)^{\top} &= \diag(u/2)\diag(u/2) = \diag(u \odot u/4) = \diag(q(u)) = \diag(x) = [\nabla^2 R(x)]^{-1}\\
    K' &= q^{-1}(B^{+}_1) = \{u \in \reals^d_{+} : \frac{1}{4}\sum_{i=1}^d u_i^2 \le 1\} = 2 B^{+}_2.
\end{align*}
We note that for this example, Assumptions~\ref{assumption:lipschitz} will not hold as relative entropy is not sufficiently behaved near the boundary of the simplex. This can be handled by modifying $\K$ to be the smoothed simplex where all weights are at least $\eps$ for some $\eps$. This is a fairly standard technique. $\eps$ can be chosen such that the full regret in sublinear (see App.~\ref{sec:EG_app}).

\begin{figure}[ht]
    \centering
    \includegraphics[scale=0.6]{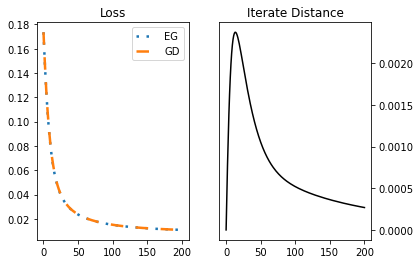}
    \caption{Gradient Descent using reparameterization $q(u)=\frac{1}{4} u \odot u$ produces iterates that closely track Exponential Gradient for a simple fixed quadratic loss example.}
    \label{fig:eg_gd}
\end{figure}

\paragraph{Log barrier with exponential reparameterization} Consider the case of a log-barrier regularization $R(x) = -\sum_{i=1}^d \log(x_i)$ with $\K = [\eps, 1]^d$.  It can readily be shown that the box constraint maps to a box constraint.  Consider $x = q(u) = \exp(u)$, where the exponential is elementwise.  We see Assumption~\ref{assumption:reparam_jacobian} is satisfied as
\begin{align*}
    J_q(u)J_q(u)^{\top} = \diag(\exp(u)) \diag(\exp(u)) = \diag(q(u)^2)= \diag(x^2) = [\nabla^2 R(x)]^{-1}~.
\end{align*}

\paragraph{Tempered Bregman divergences with power reparameterization} A more recently studied family of mirror descent regularizers interpolate between $\ell_2$ regularization and negative entropy regularization \cite{amid2019robust}.  Regularization for this family uses link function $\nabla R(x) = \log_{\tau}(x) = \frac{1}{1-\tau}(x^{1-\tau} -1)$, where $\tau$ is a \emph{temperature} that can range between $0$ and $1$.  As $\tau$ approaches $1$, the link function approaches the natural logarithm, and hence the update approaches EG, while $\tau=0$ corresponds to $\ell_2$ regularization.  Such divergences are amenable to reparameterization when $\K=B^{+}_p$.

To see this, ignoring constant factors $[\nabla^2 R(x)]^{-1} = x^{\tau}$ for $\tau \in [0,1)$.  Now if $q(u) = u^{\frac{2}{2-\tau}}$ where the power is elementwise, then $J_q(u)= \diag(u^{\frac{\tau}{2-\tau}}) = \diag(q(u)^{\tau/2}) = \diag(x^{\tau/2})$ as desired. Now, we note that the reparameterization is a power, so $B^{+}_p$ would get mapped to the $B^{+}_{\frac{2p}{2-\tau}}$, which is convex as long as $p > 1 - \tau/2$.

\subsection{Challenges}
In some situations, we do not have a closed form solution to the differential equation described by $J_q$. For example with the $\beta$-hypentropy regularizer of \cite{ghai2020exponentiated}, the reparameterization involves $q^{-1}(x) = \int (x^2 + \beta^2)^{-1/4}dx$, and this integral does not have a known closed form. Furthermore, other cases of interest, like von Neumann Entropy regularization for Matrix Multiplicative Weights \cite{JMLR:v6:tsuda05a} seem difficult to fit into this framework as Assumption~\ref{assumption:reparam_jacobian} is challenging to use with spectral functions.
\section{Reparameterization analysis}\label{sec:analysis}
We now move on to the proof of Theorem~\ref{thm:main}. The proof idea is to show that the OMD and OGD iterates are close to each other after a single update step starting from the same initial point. Then we can view the OGD update a perturbed version of OMD, and combine it with the fact that the OMD algorithm can tolerate bounded noise per trial.

We begin with the following key lemma showing that the updates $x_{t+1}$ and $q(u_{t+1})$ created by OMD and OGD respectively, are close to each other from the same initial point $x_t=q(u_t)$. To do this, we carefully analyze the errors that occur due to the approximation in \eqref{eq:approx_div} starting from a proximal formulation of mirror descent. We can show that the two updates are solutions to approximately the same strongly-convex objective, hence the solutions must be close.  

\begin{lemma}\label{lem1}
Suppose Assumptions~\ref{assumption:reparam_jacobian}-\ref{assumption:bounds} hold and $x_t=q(u_t)$, then we have that $\|x_{t+1}-q(u_{t+1})\|_2=O(G^4 G_F^{3/2}\eta^{3/2}) $.
\end{lemma}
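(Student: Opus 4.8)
The plan is to write both updates as minimizers of strongly convex proximal objectives and then invoke the standard fact that minimizers of two objectives that are $\epsilon$-close (in a suitable sense) and each $\alpha$-strongly convex must lie within $O(\epsilon/\alpha)$ of one another. Recall the proximal form of OMD: ignoring projection for the moment, $x_{t+1}$ minimizes
\begin{align*}
\Phi(x) = \eta\,\nabla f_t(x_t)^{\top}(x - x_t) + D_R(x \| x_t)
\end{align*}
over $x \in \K$. On the OGD side, $u_{t+1}$ minimizes $\Psi(u) = \eta\,\nabla \tilde f_t(u_t)^{\top}(u - u_t) + \tfrac12\|u - u_t\|_2^2$ over $u \in \K'$; substituting $u = q^{-1}(x)$ and using $\nabla\tilde f_t(u_t) = J_q(u_t)^{\top}\nabla f_t(x_t)$ (with $x_t = q(u_t)$), this becomes a function $\widehat\Psi(x) := \Psi(q^{-1}(x))$ of $x\in\K$ whose minimizer is $q(u_{t+1})$. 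The first step is therefore to show $\widehat\Psi(x)$ is a good approximation of $\Phi(x)$, with the key input being the second-order Taylor relation behind \eqref{eq:approx_div}: since $[\nabla^2 R(q(u))]^{-1} = J_q(u)J_q(u)^{\top}$, a Taylor expansion of $q$ (resp.\ $q^{-1}$) and of $D_R$ gives $\tfrac12\|q^{-1}(x) - q^{-1}(x_t)\|_2^2 = D_R(x\|x_t) + O(G^{c}\|x - x_t\|_2^3)$ for some small constant $c$, using the third-derivative bound on $R$ and the derivative bounds on $q, q^{-1}$ from Assumption~\ref{assumption:lipschitz}. Similarly the linear terms match by the Jacobian chain rule identity above. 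Hence $\widehat\Psi(x) = \Phi(x) + O(G^{c}\|x-x_t\|_2^3)$ on $\K$.

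The second step is to control the displacement $\|x-x_t\|_2$ at the relevant scale. Because the linear term carries a factor $\eta$ with $\|\eta\nabla f_t(x_t)\|_2 \le \eta G_F$ and the divergence/quadratic term is $1$-strongly convex, a standard first-order optimality argument shows $\|x_{t+1} - x_t\|_2 = O(\eta G_F)$ and likewise $\|q(u_{t+1}) - x_t\|_2 = O(G\eta G_F)$ (the extra $G$ from Lipschitzness of $q$). So on the ball of radius $O(G\eta G_F)$ around $x_t$ the error term in the comparison of $\Phi$ and $\widehat\Psi$ is $O(G^{c}(G\eta G_F)^3) = O(G^{c+3}\eta^3 G_F^3)$ — but I actually want the error in the \emph{gradients}, not the values, since the strong-convexity stability bound needs $\|\nabla\Phi - \nabla\widehat\Psi\|$ on that ball. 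Differentiating the Taylor remainder, the gradient discrepancy is one order lower: $\|\nabla\Phi(x) - \nabla\widehat\Psi(x)\|_2 = O(G^{c'}\|x-x_t\|_2^2) = O(G^{c'}(G\eta G_F)^2)$ for $x$ in that ball. (One must also check $\widehat\Psi$ is still $\Omega(1)$-strongly convex near $x_t$, which follows since $\nabla^2\widehat\Psi(x_t) = [\,J_q J_q^{\top}\,]^{-1} = \nabla^2 R(x_t) \succeq I$ and the Hessian varies slowly by the third-derivative bound, so strong convexity holds on a neighborhood of the required size once $\eta$ is small.)

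The third step combines these: let $x^\star = x_{t+1}$, $z^\star = q(u_{t+1})$ be the two constrained minimizers over $\K$. By strong convexity of $\Phi$ and first-order optimality of $x^\star$, $\Phi(z^\star) - \Phi(x^\star) \ge \tfrac12\|z^\star - x^\star\|_2^2$; symmetrically $\widehat\Psi(x^\star) - \widehat\Psi(z^\star) \ge \tfrac12\|z^\star - x^\star\|_2^2$ (using the near-unit strong convexity of $\widehat\Psi$). Adding and using $|\Phi(w) - \widehat\Psi(w)| \le \delta\|z^\star - x^\star\|_2$ for $w\in\{x^\star,z^\star\}$ with $\delta = O(G^{c'}(G\eta G_F)^2)$ — this Lipschitz-in-displacement bound on the difference follows by integrating the gradient discrepancy from step two along the segment, noting both points are in the $O(G\eta G_F)$-ball — yields $\|z^\star - x^\star\|_2^2 \le 2\delta\|z^\star - x^\star\|_2$, i.e. $\|x_{t+1} - q(u_{t+1})\|_2 \le 2\delta = O(G^4\eta^2 G_F^2)$. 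Finally, one reconciles the two projection operators: OGD uses Euclidean projection onto $\K'$ while OMD uses Bregman projection onto $\K$; since $q$ is a bijection $\K'\to\K$ with $[\nabla^2 R(q(u))]^{-1}=J_qJ_q^{\top}$, $q$ approximately intertwines the two projections up to the same cubic error, and the Lipschitz continuity of $D_R(x\|\cdot)$ and of $q$ (Assumptions~\ref{assumption:lipschitz}) absorb this into the same order.

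\emph{Remark on bookkeeping vs.\ the stated bound.} The calculation I sketched naively produces $O(\eta^2)$ with some power of $G,G_F$, whereas the lemma claims $O(G^4 G_F^{3/2}\eta^{3/2})$. The discrepancy in the $\eta$-exponent comes from where the displacement bound is applied: the sharper route is to bound the two minimizers' distance directly via $\|x_{t+1}-q(u_{t+1})\|_2 \lesssim \|\nabla\Phi(q(u_{t+1}))\|_2$ (strong convexity + optimality of $x_{t+1}$) and then estimate $\|\nabla\Phi(q(u_{t+1}))\| = \|\nabla\Phi(q(u_{t+1})) - \nabla\widehat\Psi(q(u_{t+1}))\|$, i.e.\ the gradient discrepancy evaluated at the OGD point, which by step two is $O(G^{c'}\|q(u_{t+1})-x_t\|_2^2)$; the product of a Lipschitz factor from the projection and the $G_F^{1/2}$ arising when one does not pay the full $\eta$ on both quadratic factors gives the stated exponents. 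I would carry out this bookkeeping carefully at the end rather than up front, since it does not change the structure of the argument.

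\textbf{Main obstacle.} The conceptual steps are routine; the real difficulty is the \emph{uniform} control of the Taylor remainders — turning ``$D_R(q(u)\|q(v))\approx\tfrac12\|u-v\|^2$ up to second order'' into an explicit bound on $\|\nabla\Phi - \nabla\widehat\Psi\|$ valid on a ball whose radius itself depends on $\eta$, tracking all the $G$'s through the composition $D_R\circ(q,q)$ and through $q^{-1}$, and simultaneously verifying that the reparameterized objective stays strongly convex on that ball. Handling the two different projections cleanly (Euclidean on $\K'$ vs.\ Bregman on $\K$) without losing a power of $T$ is the second delicate point.
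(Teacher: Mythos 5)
Your proposal is correct in substance and shares the paper's skeleton — write both updates in proximal form as constrained minimizations (OMD over $\K$, OGD over $\K'$, pulled back through the bijection $q$), localize both minimizers to an $O(\eta G_F\,\mathrm{poly}(G))$ ball around $x_t$, Taylor-expand $D_R(\cdot\|x_t)$ and the linear term using Assumption~\ref{assumption:reparam_jacobian} and the derivative bounds, and conclude by strong-convexity stability of minimizers. The genuine difference is the stability step: the paper compares \emph{objective values} (the two objectives differ by $O(G^8G_F^3\eta^2)$ and are $\Theta(1/\eta)$-strongly convex, giving $O(\eta^{3/2})$), whereas you compare \emph{gradients} and invoke first-order optimality at both constrained minimizers; carried out carefully (strong monotonicity of the OMD objective plus the variational inequality at $q(u_{t+1})$ — you do not even need local strong convexity of the reparameterized objective), this gives the sharper $O(\mathrm{poly}(G)G_F^2\eta^2)$, which implies the lemma for $\eta\le 1$ and is exactly the improvement the paper gestures at in \cref{rmk:projection} and its open question about $O(\sqrt{T})$ regret. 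Two clean-ups: your closing paragraph about ``reconciling the two projection operators'' is unnecessary — once both updates are written as argmins over $\K'$ and $\K$ with $q$ bijective, the Euclidean and Bregman projections are already accounted for exactly (this is also how the paper handles projection), and no approximate intertwining of projections is needed; and the final ``bookkeeping'' remark trying to manufacture the exponents $\eta^{3/2}$ and $G_F^{1/2}$ (a ``Lipschitz factor from the projection'' times ``$G_F^{1/2}$ from not paying the full $\eta$'') is not a coherent derivation — but it is also not needed, since your stronger $O(\eta^2)$ bound already yields the stated estimate. Likewise, the claim $|\Phi(w)-\widehat{\Psi}(w)|\le\delta\|z^{\star}-x^{\star}\|_2$ for each $w$ is not what integrating the gradient gap along the segment gives; what it gives is a bound on the difference $[\Phi-\widehat{\Psi}](z^{\star})-[\Phi-\widehat{\Psi}](x^{\star})$, which is precisely the quantity your added strong-convexity inequalities require, so the argument survives as is.
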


\begin{proof}
We consider the following forms of Algorithms~\ref{alg2} and ~\ref{alg1}.
\begin{align*}
x_{t+1}&=\argmin_{x\in \K} \nabla f_t(x_t)^{\top}(x-x_t) +\frac{1}{\eta} D_R(x||x_t)\\
u_{t+1}&=\argmin_{u\in \K'} \nabla \tilde{f}_t(u_t)^{\top}(u-u_t) +\frac{1}{2\eta} \|u-u_t\|_2^2
\end{align*}
 
We observe that both objectives we are minimizing can be written as the sum of a linear function plus a strongly convex function. In particular, if $\|x-x_t\|_2>2 \eta G_F$, then the objective takes positive value which can't be minimal because taking $x=x_t$ gives 0. Define $\K_{r,x_t}=\K \cap \{x: \|x-x_t\|_2 \le r\}$, then the OMD iterate $x_{t+1} \in \K_t \eqdef \K_{2\eta G_F,x_t}$.  Likewise, $u_{t+1} \in \K_t' \eqdef \K'_{2\eta G_F \cdot G,u_t}$. 

Using Taylor expansion with Assumption~\ref{assumption:lipschitz}, for $D_R(x||x_t)$ with $\|x-x_t\|_2 \le 2 \eta G_F $, we rewrite it as
\begin{align*}
    D_R(x||x_t)&=R(x)-R(x_t)-\nabla R(x_t)^{\top}(x-x_t)\\
    &=\frac{1}{2}(x-x_t)^{\top} \nabla^2 R(x_t) (x-x_t)+O(G\cdot G^3_F \eta^3)\\
    &=\frac{1}{2}\|x-x_t\|^2_{\nabla^2 R(x_t)}+\eps(x)~,
\end{align*}

where $\eps(x) = O(G \cdot G^3_F \eta^3)$ for $x\in \K_{t}$.
And the OMD update becomes
$$
x_{t+1}=\argmin_{x\in \K_{t}} \nabla f_t(x_t)^{\top}(x-x_t) +\frac{1}{2\eta}\|x-x_t\|^2_{\nabla^2 R(x_t)} + \eps(x)~.
$$
We now use Taylor expansion to relate $\nabla \tilde{f}_t (u_t)^{\top}(u - u_t)$ to $ \nabla f_t(x_t)^{\top}(x - x_t)$.  Let $x = q(u)$ for $u \in \K'_{t}$, hence $x \in \bar{\K}_t\eqdef \K_{2\eta G_F\cdot G^2,x_t}$, then we have 
\begin{align*}
    \nabla \tilde{f}_t^{\top} (u_t)^{\top}(u - u_t) &= \nabla f_t(x_t)^{\top} J_q(u_t) (u - u_t) \\
    &= \nabla f_t(x_t)^{\top} J_q(u_t) (q^{-1}(x) - q^{-1}(x_t))\\
    &= \nabla f_t(x_t)^{\top} J_q(u_t) (J^{-1}_q(u_t)(x-x_t)) + \tilde{\eps}(x)\\
    &= \nabla f_t(x_t)^{\top}(x - x_t) + \tilde{\eps}(x)~,
\end{align*}
where $\tilde{\eps}(x) = O(G^5 G^3_F \eta^2)$.

Using the above approximation, we can rewrite the OGD update as 
$$
q(u_{t+1})=\argmin_{x\in \bar{\K}_{t}} \nabla f_t(x_t)^{\top}(x-x_t) +\frac{1}{2\eta} \|q^{-1}(x)-q^{-1}(x_t)\|_2^2+\tilde{\eps}(x)~.
$$

On the other hand, we can rewrite $\frac{1}{2} \|q^{-1}(x)-q^{-1}(x_t)\|_2^2$ as
\begin{align*}
    \frac{1}{2} \|q^{-1}(x)-q^{-1}(x_t)\|_2^2&=\frac{1}{2}\| J_q^{-1}(x_t) (x-x_t)+O(G^5 G^2_F \eta^2)\|_2^2\\
    &=\frac{1}{2}\|x-x_t\|^2_{J_q^{-1}(x_t) J_q^{-1}(x_t)^{\top}}+O(G^8 G^3_F \eta^3)
\end{align*}
Then by Assumption~\ref{assumption:reparam_jacobian}, 
$
\nabla^2R(x_t)=J_q^{-1}(x_t) J_q^{-1}(x_t)^{\top}
$, so 
we conclude that $\frac{1}{\eta} B_R(x||x_t)$ and $\frac{1}{2\eta} \|q^{-1}(x)-q^{-1}(x_t)\|_2^2$ are $O(G^8 G_F^3\eta^2)$-close. We can then rewrite the OGD update as
$$
q(u_{t+1})=\argmin_{x\in \bar{\K}_{t}} \nabla f_t(x_t)^{\top}(x-x_t) +\frac{1}{2\eta}\|x-x_t\|^2_{\nabla^2R(x_t)}+\bar{\eps}(x)~,
$$
where $\bar{\eps}(x) = O(G^8 G^3_F \eta^2)$ for $x\in \bar{\K}_{t}$. We note that $\K_{t} \subseteq \bar{\K}_{t}$ as needed.

Comparing both OMD and OGD updates, the objective functions are only off by an $O(G^8 G^3_F \eta^2)$ term, therefore the minimizers are also $O(G^4 G^{3/2}_F\eta^{\frac{3}{2}})$-close since the objective functions are both $\Theta(\frac{1}{\eta})$ strongly convex.
\end{proof}

\begin{remark}\label{rmk:projection}
It can be shown that if projection is not required, $\|x_{t+1} -q_{u_{t+1}}\|_2 = O(\eta^2)$ in Lemma~\ref{lem1}.  Furthermore, if the projection operation that Algorithms~\ref{alg2} and \ref{alg1} perform are the same, the distance only gets closer.  Such a bound would result in an $O(\sqrt{T})$ regret bound.  For the case of EG over the simplex, the projection operations in both are the same weight scaling operations, so this could be a part of making $O(\sqrt{T})$ regret possible, though this still requires a better approach for handling exploding Lipschitz constants (see App.~\ref{sec:EG_app}).
\end{remark}

Next, we show that the OMD algorithm is actually robust to noise: with a (potentially non-stochastic) bounded perturbation on the output $x_t$ per round, we can still get vanishing regret. 

\begin{lemma}\label{lem2} 
Suppose Assumptions~\ref{assumption:lipschitz} and ~\ref{assumption:bounds} hold and Algorithm $\A$ does the following update:
$$
    x_{t+1}=r_t +\argmin_{x\in \K} \nabla f_t(x_t)^{\top}(x-x_t) +\frac{1}{\eta} D_R(x\|x_t)
$$
where $\|r_t\|_2 \le C$. The regret of Algorithm $\A$ can be upper bounded by
$$
\regret(\A) \le \frac{C T G}{\eta} + \frac{D}{\eta} +\frac{\eta G^2 T}{2}
$$

\end{lemma}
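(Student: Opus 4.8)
\quad The plan is to run the textbook online mirror descent regret argument directly on the \emph{perturbed} iterates $x_1,\dots,x_{T+1}$ produced by $\A$, absorbing the effect of the noise $r_t$ into the telescoping Bregman sum rather than controlling $\nabla f_t(x_t)^\top r_t$ term by term. Write
\[
\hat{x}_{t+1}\ \eqdef\ \argmin_{x\in\K}\ \nabla f_t(x_t)^\top(x-x_t)+\tfrac{1}{\eta}\,D_R(x\|x_t)
\]
for the unperturbed mirror step taken from $x_t$, so that $x_{t+1}=r_t+\hat{x}_{t+1}$, and recall from Section~\ref{sec:prelim} that $\hat{x}_{t+1}=\Pi^R_{\K}(y_{t+1})$ with $\nabla R(y_{t+1})=\nabla R(x_t)-\eta\nabla f_t(x_t)$.

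First, convexity of $f_t$ reduces the task to the linearized regret: letting $x^\star\eqdef\argmin_{x\in\K}\sum_t f_t(x)$, we have $f_t(x_t)-f_t(x^\star)\le\nabla f_t(x_t)^\top(x_t-x^\star)$, so it suffices to bound $\sum_t\nabla f_t(x_t)^\top(x_t-x^\star)$. Next I would invoke the usual one-step inequality: the three-point identity for the Bregman divergence together with $\nabla R(x_t)-\nabla R(y_{t+1})=\eta\nabla f_t(x_t)$ gives $\eta\nabla f_t(x_t)^\top(x_t-x^\star)=D_R(x^\star\|x_t)-D_R(x^\star\|y_{t+1})+D_R(x_t\|y_{t+1})$, and the generalized Pythagorean inequality for the Bregman projection $\hat{x}_{t+1}=\Pi^R_{\K}(y_{t+1})$ (valid since $x^\star\in\K$) yields $D_R(x^\star\|y_{t+1})\ge D_R(x^\star\|\hat{x}_{t+1})$, hence
\[
\eta\,\nabla f_t(x_t)^\top(x_t-x^\star)\ \le\ D_R(x^\star\|x_t)-D_R(x^\star\|\hat{x}_{t+1})+D_R(x_t\|y_{t+1}).
\]
The drift term $D_R(x_t\|y_{t+1})$ is at most $\tfrac{\eta^2}{2}\|\nabla f_t(x_t)\|_2^2\le\tfrac{\eta^2 G_F^2}{2}$ by $1$-strong convexity of $R$ (equivalently $1$-smoothness of the conjugate $R^*$) and Assumption~\ref{assumption:bounds}, exactly as in the unperturbed analysis.

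The one genuinely new ingredient is the perturbation. Since $x_{t+1}-\hat{x}_{t+1}=r_t$ with $\|r_t\|_2\le C$ and $D_R(x^\star\|\cdot)$ is $G$-Lipschitz by Assumption~\ref{assumption:lipschitz}, we have $-D_R(x^\star\|\hat{x}_{t+1})\le-D_R(x^\star\|x_{t+1})+GC$. Substituting into the one-step bound, summing over $t=1,\dots,T$, telescoping $\sum_t\big(D_R(x^\star\|x_t)-D_R(x^\star\|x_{t+1})\big)=D_R(x^\star\|x_1)-D_R(x^\star\|x_{T+1})\le D$ (using $D_R\ge0$ and the diameter bound of Assumption~\ref{assumption:bounds}), and finally dividing by $\eta$, I get $\regret(\A)\le\tfrac{CTG}{\eta}+\tfrac{D}{\eta}+\tfrac{\eta G_F^2 T}{2}$, which is the claimed bound (with the gradient bound $G_F$ in place of the generic constant $G$ in the last term).

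The main obstacle — the single point of departure from a standard OMD proof — is that the noise $r_t$ is injected \emph{after} the projection, so the sequence actually fed back into the algorithm is not an OMD trajectory and its potential $D_R(x^\star\|x_t)$ does not telescope on its own; the trick is that each round introduces exactly one extra ``jump'' in this potential, of magnitude at most $G\|r_t\|_2\le GC$ by Lipschitzness of the Bregman divergence in its second argument, which is precisely what produces the $GCT/\eta$ overhead. A minor, essentially bookkeeping, caveat is that the argument as stated implicitly assumes the perturbed iterates remain in $\K$ (needed both for the Lipschitz bound on $D_R(x^\star\|\cdot)$ and for the potential to be well defined); this holds in the intended application to Theorem~\ref{thm:main}, where the perturbed iterates are exactly $x_t=q(u_t)\in\K$.
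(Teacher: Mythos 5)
Your proof is correct and follows essentially the same route as the paper's: linearize via convexity, apply the three-point identity and Bregman-projection Pythagorean inequality to the unperturbed step, absorb the perturbation into the telescoping potential via the $G$-Lipschitzness of $D_R(x^\star\|\cdot)$ (giving the $CTG/\eta$ term), and bound the drift $D_R(x_t\|y_{t+1})$ by strong convexity. Your observation that the last term should naturally read $\tfrac{\eta G_F^2 T}{2}$ rather than $\tfrac{\eta G^2 T}{2}$ is a fair catch of a constant-labeling slip that also appears in the paper's own proof.
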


This follows from a slightly modified analysis of OMD from \cite{hazan2019introduction}, where the effect of the perturbation is bounded due to the Bregman divergence being Lipschitz. In particular, we end up with the standard regret bound for mirror descent that follows from a telescoping argument with an additional error term 
\begin{align*}
    \frac{1}{\eta}\sum_{t=1}^T D_R(x^{*} \|x_t) - D_R(x^{*}\| x^{R}_t) \le \frac{CTG}{\eta}~,
\end{align*}
where $x^{R}_t$ is the counterfactual mirror descent iterate with $r_t = 0$.

Full proof of Lemma~\ref{lem2} can be found in Appendix~\ref{appendix}. Combining both lemmas, we complete our proof of Theorem~\ref{thm:main}:

\begin{proof}
Algorithm \ref{alg1} gives a perturbed version of OMD with perturbation bounded by $C=O(\eta^{\frac{3}{2}})$ due to Lemma \ref{lem1}. Plugging $C=O(G^4G_F^{3/2}\eta^{3/2})$ into the bound of Lemma \ref{lem2}, the regret is upper bounded by $O(G^5G^{3/2}_F \sqrt{\eta}T +\frac{D}{\eta})$ ignoring the lower order term. Optimizing this and all constants gives the choice of $\eta=T^{-2/3} D^{2/3} G^{-10/3} G^{-1}_F$ and regret bound $O(T^{2/3} D^{1/3} G^{10/3}G_F)$.
\end{proof}

\section{Implicit OMD reparameterization}\label{sec:reverse}
We have shown that a general convex OMD can be reparameterized as a (potentially) non-convex OGD, with vanishing $O(T^{\frac{2}{3}})$ regret. The other direction from OGD to OMD is even more interesting: given a non-convex OGD, can we show its global convergence by showing the existence of a convex OMD which corresponds to OGD implicitly? 

In other words, given a convex and compact domain $\K'$, what do we need to know about the domain $\K'$ and the (not necessarily convex) loss $\tilde{f}_t$ to ensure the existence of such equivalence? Fully characterizing the necessary and sufficient condition seems hard, and we provide here a simple sufficient condition which covers some known scenarios.

\begin{assumption}\label{assumption_app}
We assume the following properties about $q$.
\begin{itemize}
    \item There exists a function $q$ such that $\tilde{f}_t(u)$ can be written as $f_t(q(u))$ where $f_t$ is convex.
    \item $q$ is a $C^3$-diffeomorphism, and $J_q(u)$ is diagonal.
    \item $q(\K')$ is convex and compact.
\end{itemize}
\end{assumption}

We argue that once the conditions above are satisfied, there exists a regularization $R$ and a corresponding OMD with the desired equivalence. The only thing we need to verify is the existence of a strongly convex regularization $R$ which satisfies Assumption~\ref{assumption:reparam_jacobian}. We first show that $J_q(u)$ always has non-zero (in fact, with the same sign because of continuity) determinant due to the fact that $q$ is a diffeomorphism. Then $R$ can be reconstructed by integrating twice its Hessians which we compute from the equation in Assumption \ref{assumption:reparam_jacobian}. $R$ is also strongly convex as the Hessians are strictly PSD. We note that such conditions may not be tight, but are general, succinct and may cover some interesting examples.

\begin{theorem}\label{thm:app}
Given a convex and compact domain $\K'$, and not necessarily convex loss $\tilde{f}_t$ satisfying Assumption~\ref{assumption:bounds}. When Assumption \ref{assumption_app} is met, there exists an OMD object with convex loss $f_t$, a convex domain and a strongly convex regularization $R$ satisfying Assumption~\ref{assumption:reparam_jacobian}. As a result, running Algorithm \ref{alg1} on loss $\tilde{f}_t(u)$ has regret upper bound $\tilde{O}(T^{\frac{2}{3}})$.
\end{theorem}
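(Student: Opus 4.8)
The plan is to verify that Assumption~\ref{assumption_app} lets us reconstruct a mirror-descent object (a convex domain $\K$, convex losses $f_t$, and a $1$-strongly convex regularizer $R$ satisfying Assumption~\ref{assumption:reparam_jacobian}) to which Theorem~\ref{thm:main} applies after rescaling. First I would set $\K \eqdef q(\K')$, which is convex and compact by the third bullet, and define $f_t$ to be the convex function guaranteed by the first bullet so that $\tilde f_t = f_t \circ q$; the identity $\nabla \tilde f_t(u) = J_q(u)^\top \nabla f_t(q(u))$ shows the gradient oracle for $\tilde f_t$ is exactly the reparameterized oracle used in Algorithm~\ref{alg1}. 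The heart of the argument is constructing $R$. Since $q$ is a $C^3$-diffeomorphism, $J_q(u)$ is invertible for every $u \in \K'$, and by the second bullet it is diagonal, say $J_q(u) = \diag(j_1(u), \dots, j_d(u))$ with each $j_i$ continuous and nonvanishing, hence of constant sign on the connected set $\K'$. Define $M(x) \eqdef \bigl(J_q(q^{-1}(x)) J_q(q^{-1}(x))^\top\bigr)^{-1} = \diag\!\bigl(j_1(q^{-1}(x))^{-2}, \dots, j_d(q^{-1}(x))^{-2}\bigr)$, a positive-definite diagonal matrix field on $\K$; the goal is to produce $R$ with $\nabla^2 R \equiv M$, which is precisely Assumption~\ref{assumption:reparam_jacobian}.

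The key step is integrability: a diagonal matrix field $M(x) = \diag(m_1(x_1,\dots,x_d), \dots)$ is the Hessian of some function iff the mixed-partial (symmetry) conditions $\partial_j m_i = \partial_i m_j$ hold. Here I would exploit that $q$ is applied coordinate-wise is \emph{not} assumed, only that $J_q$ is diagonal — but a diagonal Jacobian already forces $q$ to act coordinate-wise, i.e. $q(u) = (q_1(u_1), \dots, q_d(u_d))$, since $\partial q_i/\partial u_k = 0$ for $k \ne i$. Consequently $j_i$ depends only on $u_i$, so $m_i(x) = j_i(q_i^{-1}(x_i))^{-2}$ depends only on $x_i$, and the symmetry conditions hold trivially ($\partial_j m_i = 0 = \partial_i m_j$ for $i \ne j$). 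Then $R$ is obtained by integrating twice: first set $\rho_i(x_i) \eqdef \int m_i(x_i)\, dx_i$ (an antiderivative, well-defined up to a constant since $\K$ is a box-like convex set along each axis — or more carefully, since $\K$ is convex its projection to each axis is an interval), and then $R(x) \eqdef \sum_{i=1}^d \int \rho_i(x_i)\, dx_i$, so that $\nabla^2 R(x) = \diag(m_1(x_1), \dots, m_d(x_d)) = M(x)$. Because each $m_i$ is continuous and strictly positive on a compact interval, it is bounded below by some $c > 0$, so $R$ is $c$-strongly convex; rescaling $R \leftarrow R/c$ (and correspondingly $\eta$) makes it $1$-strongly convex without affecting the $T$-dependence, matching the footnote in the OMD setup.

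The last step is to check that the smoothness/boundedness hypotheses of Theorem~\ref{thm:main} are inherited: $q$ being a $C^3$-diffeomorphism on the compact $\K'$ gives uniform bounds on $q$, $q^{-1}$, and their first three derivatives, hence a finite $G$ in Assumption~\ref{assumption:lipschitz} (the first and third derivatives of $R$ are built from $m_i$ and its derivative, both continuous on a compact set; Lipschitzness of $D_R(x\|\cdot)$ follows likewise); Assumption~\ref{assumption:bounds} on $\tilde f_t$ is assumed, and it transfers to $f_t$ on $\K$ since $\nabla f_t(q(u)) = J_q(u)^{-\top}\nabla \tilde f_t(u)$ with $J_q^{-1}$ bounded, while the Bregman diameter $D$ is finite by compactness of $\K$ and continuity of $R$. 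Then Theorem~\ref{thm:main} applied to this OMD object — noting Algorithm~\ref{alg1} run on $\tilde f_t$ over $\K'$ is exactly the reparameterized OGD analyzed there — yields the $\tilde O(T^{2/3})$ regret bound (the tilde absorbing the problem-dependent constants $G, G_F, D$). I expect the main obstacle to be the integrability/symmetry argument: one must be careful that a diagonal Jacobian genuinely forces the coordinate-wise structure that makes the Hessian symmetry conditions automatic, and that the domain $\K'$ (and hence $\K$) is connected and "rectangular enough" along each axis for the iterated antiderivatives to define a single-valued $R$; on a general convex body this needs the observation that convexity makes each axis-projection an interval and that $m_i$ depends on $x_i$ alone, so no monodromy arises.
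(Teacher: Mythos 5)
Your proposal is correct and follows essentially the same route as the paper: use the diagonal Jacobian (hence coordinate-wise $q$ on the convex, connected $\K'$) to make the target Hessian $[J_q(u)J_q(u)^{\top}]^{-1}$ separable, reconstruct $R$ by integrating it twice, obtain strong convexity and the constants in Assumptions~\ref{assumption:lipschitz}--\ref{assumption:bounds} from positivity and continuity on a compact domain, and then invoke Theorem~\ref{thm:main}. The only difference is in presentation: you integrate $m_i(x_i)=1/(q_i'(q_i^{-1}(x_i)))^2$ directly in the $x$-coordinates, whereas the paper solves the equivalent second-order ODE for $\tilde R_i = R_i\circ q_i$ in $u$-coordinates via variation of constants; a change of variables $x_i=q_i(u_i)$ shows both constructions give the same $R_i'$, so your derivation is just a slightly more direct route to the same regularizer.
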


\begin{proof}
The first two properties are satisfied by the assumption. We verify the third property by constructing a regularization $R$, which is strongly convex and satisfies Assumption \ref{assumption:reparam_jacobian}. The rest follows Theorem~\ref{thm:main}. By the fact that $q$ is a diffeomorphism, $J_q(u)$ is an invertible matrix. The fact that $J_q(u)$ is an invertible matrix implies that $J_q(u) J_q(u)^{\top}$ is invertible, and further $J_q(u) J_q(u)^{\top}\succ 0$.

Denote $H(u)=[J_q(u) J_q(u)^{\top}]^{-1}\succ 0$ which is well defined by the above argument, we want to construct $R$ such that $\nabla R^2 (q(u))=H(u)$. By the assumption that $J_q(u)$ is diagonal, we can write $q(u)$ as
$$q(u)=\left(q_1(u_1),q_2(u_2),..., q_d(u_d)\right)$$
where each $q_i$ is a scalar function, such that $H(u)$ is diagonal with $H(u)_{ii}=1/(q_i'(u_i))^2$, which only depends on $u_i$. We set $R(q(u))=\sum_{i=1}^d R_i(q_i(u_i)))$ and denote the function $\tilde{R_i}(u_i)=R_i(q_i(u_i))$ with variable $u_i$, then by the chain rule we have that
$$\tilde{R_i}'(u_i)=q_i'(u_i) R_i'(x_i)$$
$$\tilde{R_i}''(u_i)=q_i''(u_i) R_i'(x_i)+(q_i'(u_i))^2 R_i''(x_i)~.$$
Plugging the first formula into the second one, we eliminate the $R_i'(x_i)$ term:
$$\tilde{R_i}''(u_i)=\frac{q_i''(u_i)\tilde{R_i}'(u_i)}{q_i'(u_i)} +(q_i'(u_i))^2 R_i''(x_i)~.$$
Recall, we want to find $R$ such that $R_i''(x_i)=1/(q_i'(u_i))^2$. Plugging it back to the equation above, the problem is reduced to find $R_i$ such that it satisfies the ODE
$$q_i'(u_i)\tilde{R_i}''(u_i)-q_i''(u_i)\tilde{R_i}'(u_i)-q_i'(u_i)=0~.$$
It's known that there exists a solution to any linear and continuous second-order ODE. In fact, using the standard variation of constant method, one example solution $\tilde{R_i}'(u_i)$ can be 
$$\tilde{R_i}'(u_i)=q_i'(u_i)\int_{C_i}^{u_i} \frac{1}{q_i'(u)} d u+c_i q_i'(u_i)~.$$
where $c_i,C_i$ are constants and $C_i=\min_{u\in \K'}u_i$ . It's well-defined because $q_i'(u_i)$ always has the same sign. To verify the correctness, we calculate that
$$\tilde{R_i}''(u_i)=q_i''(u_i)\int_{C_i}^{u_i} \frac{1}{q_i'(u)} d u+1+c_i q_i''(u_i)~,$$
which solves the ODE together with the solution of $\tilde{R_i}'(u_i)$.

Next, plugging $\tilde{R_i}(u_i)=\int_{C_i}^{u_i} \tilde{R_i}'(u) du$ and $u=q^{-1}(x)$ into the expression of $R$ gives a solution of the regularization. Its strong-convexity is implied by the fact that $\nabla_R^2 (x)=H(u)\succ 0$ and is continuous over a compact domain, thus there exists a constant $c>0$ such that $\nabla_R^2 (x)\succ c I$.

Finally, $q$ being a $C^3$-diffeomorphism means that both $q$ and $q^{-1}$ are 3 times continuously differentiable. By the compactness of domains and the continuity of the derivatives, there exists a constant $G$ satisfying Assumption \ref{assumption:lipschitz}. A similar argument holds for $R$.
\end{proof}

\section{Conclusion}
We study the convergence of non-convex gradient descent in this paper, by showing (approximate) algorithmic equivalence between gradient descent and mirror descent in the discrete setting. We prove that under certain geometric and smoothness conditions, running OGD on non-convex losses obtained by reparameterizing a convex OMD has regret bound $O(T^{\frac{2}{3}})$, answering an open problem in \cite{amid2020reparameterizing}. Our analysis is based on a new algorithmic equivalence technique, combining the one-step closeness between OMD and the reparameterized OGD and the robustness of OMD. We further extend our result to the other direction, providing sufficient conditions for a non-convex OGD to have an implicit corresponding OMD and thus converge well. We leave several questions as future directions.

\begin{enumerate}
    \item Is the $O(T^{\frac{2}{3}})$ regret bound improvable to $O(\sqrt{T})$ in general? The decay in the bound in this work seems to be caused by differences in projection in the reparameterized space and the original Bregman projection. Perhaps a different analysis technique can produce optimal bounds.
    
    \item Can assumptions for reparameterization be relaxed?  For example, it is not clear that Assumption~\ref{assumption:reparam_jacobian} is a necessary condition. In the implicit mirror descent direction, it may be possible to lift the diagonal $J_q(u)$ assumption in Theorem~\ref{thm:app} with more refined conditions on PDEs. Perhaps the commuting parameterization conditions from \cite{li2022implicit} can be adapted from the continuous to the discrete setting.
\end{enumerate}
\newpage

\bibliographystyle{plain}
\bibliography{bib}

\newpage
\appendix
\section{Proof of Lemma~\ref{lem2}}\label{appendix}

\begin{proof}
We define the sequences $y^R_t, x^R_t$ as one step of OMD from our current iterate with learning rate $\eta$, i.e 
\begin{align*}
    y^R_{t+1} &= \nabla R^{-1}(\nabla R(x_t) - \eta \nabla f_t(x_t))\\
    x^R_{t+1} &= \argmin_{x\in \K} D_R(x||y^R_{t+1})
\end{align*}

Now, we note that $\nabla f_t(x_t)) = \frac{\nabla R(x_t)- \nabla R(y^R_{t+1})}{\eta}$ and hence, we have 
\begin{align*}
    \regret(\A) \le \sum_{t=1}^T \nabla f_t(x_t))^{\top}(x_t - x^{*}) &=\frac{1}{\eta}\sum_{t=1}^T (\nabla R(x_t)- \nabla R(y^R_{t+1}))^{\top} (x_t - x^{*})\\
    &= \frac{1}{\eta}\sum_{t=1}^T \big[D_R(x^{*}|| x_t) - D_R(x^{*}|| y^R_{t+1}) + D_R(x_t|| y^R_{t+1}) \big]\\
    & \le \frac{1}{\eta}\sum_{t=1}^T \big[D_R(x^{*}|| x_t) - D_R(x^{*}|| x^R_{t+1}) + D_R(x_t|| y^R_{t+1}) \big] 
\end{align*}
where the second equality follows via the three point lemma and the last inequality follows via projection. Now, we note that by the bound on $\|r_t\|_2$ and the Lipschitz condition on $D_R$, we have $|D_R(x^{*}|| x^R_{t+1}) - D_R(x^{*}|| x_{t+1})| =O( C G) $.  Combining with a telescoping argument, we have

\begin{align*}
    \regret(\A) &\le \frac{C T G}{\eta} + \frac{1}{\eta}\sum_{t=1}^T \big[D_R(x^{*}|| x_t) - D_R(x^{*}|| x_{t+1})\big] + \frac{1}{\eta}\sum_{t=1}^T D_R(x_t|| y^R_{t+1}) \\
    &\le \frac{C T G}{\eta} + \frac{D_R(x^{*}|| x_1)}{\eta} +\frac{1}{\eta}\sum_{t=1}^T D_R(x_t|| y^R_{t+1}) \le \frac{C T G}{\eta} + \frac{D}{\eta} +\frac{1}{\eta}\sum_{t=1}^T D_R(x_t|| y^R_{t+1})
\end{align*}
The final term is bounded via Cauchy-Schwartz:
\begin{align*}
    D_R(x_t|| y^R_{t+1}) + D_R(y^R_{t+1}|| x_t) &= (\nabla R(x_t)- \nabla R(y^R_{t+1}))^{\top}(x_t - y^R_{t+1})\\
    &= \eta \nabla f_t(x_t) ^{\top}(x_t - y^R_{t+1}) \\
    &\le \eta \|\nabla f_t(x_t)\|_2 \|x_t - y^R_{t+1}\|_2 \\
    &\le \frac{1}{2}\eta^2 \|\nabla f_t(x_t)\|^2_2 + \frac{1}{2}\|x_t - y^R_{t+1}\|^2_2 \\
\end{align*}
Noting that $D_R(y^R_{t+1}|| x_t) \ge \frac{1}{2}\|x_t - y^R_{t+1}\|^2_2$ via the $1$-strong convexity of $R$, we have $D_R(x_t|| y^r_{t+1}) \le \frac{1}{2}\eta^2 G^2$ by Assumption~\ref{assumption:bounds} and so 
\begin{align*}
    \regret \le \frac{C T G}{\eta} + \frac{D}{\eta} +\frac{\eta G^2 T}{2}~.
\end{align*}
\end{proof}

\section{Regret bound of the EG example}\label{sec:EG_app}
We modify the domain $\K$ to be the smoothed simplex with all weights being at least $T^{-\epsilon}$, where $\epsilon$ is to be determined later. The smoothing basically worsens the loss of each step by $O(G_F T^{-\epsilon})$, and in all worsens the regret by $O(G_F T^{1-\epsilon})$.

We need to check the constants in Assumption \ref{assumption:lipschitz} if they have dependence on $T$. The Lipschitzness of $q$ is fine, the $1-st$ derivative of $R$ and the Lipschitzness of $D_R(x\| \cdot)$ have only logarithmic dependence. However, the first and second derivatives of $q^{-1}$ and the third derivative of $R$ do have norms depending on $T$, with upper bounds to be $T^{\frac{\epsilon}{2}}$, $T^{\frac{3\epsilon}{2}}$ and $T^{2 \epsilon}$ respectively. 

As a result, we can only use $G=T^{2 \epsilon}$. Plugging it back to Lemma \ref{lem1}, we have that $C=O(T^{8 \epsilon} G_F^{\frac{3}{2}} \eta^{\frac{3}{2}})$. Combining this with Lemma \ref{lem2} we have that
$$
\regret(\A)=O(T^{10 \epsilon+1} G_F^{\frac{3}{2}} \eta^{\frac{1}{2}}+\frac{D}{\eta}+G_F T^{1-\epsilon})
$$
whenever $\eta<1$ which is the case. Balancing the first two terms give the optimal choice of $\eta=D^{\frac{2}{3}}G_F^{-1}T^{-\frac{20\epsilon+2}{3}}$. Then choosing $\eta=\frac{1}{23}$ gives an $O(T^{\frac{22}{23}})$ regret bound.

\end{document}